\documentclass{amsart}
\usepackage{amsfonts}
\usepackage{bbm}
\usepackage{mathrsfs}
\usepackage{amsmath}
\usepackage{bigdelim}
\usepackage{multirow}
\usepackage{amssymb}
\usepackage{indentfirst}
\usepackage{CJK}
\usepackage{fancyhdr}
\usepackage{amscd,amssymb,amsmath,graphicx,verbatim,xy}
\usepackage{url}
\usepackage[TS1,OT1,T1]{fontenc}

\usepackage{rotating}    
\usepackage{float}
\usepackage{multirow} 
\usepackage{array}    
\usepackage{xcolor}
\usepackage{tikz}
\usepackage{tikz-cd}
\usepackage{bbm}
\usepackage{caption}
\usetikzlibrary{arrows.meta, positioning, calc}
\usepackage{makecell}
\usepackage{geometry}
\usepackage{graphics}
\usepackage{url}

\usepackage{hyperref}

\newtheorem{theorem}{Theorem}[section]

\newtheorem{proposition}[theorem]{Proposition}

\theoremstyle{definition}
\newtheorem{definition}[theorem]{Definition}
\newtheorem{example}[theorem]{Example}

\theoremstyle{remark}
\newtheorem{remark}[theorem]{Remark}
\numberwithin{equation}{section}

\begin{document}

\title[G-LoG Bi-filtration for Medical Image Classification]{G-LoG Bi-filtration for Medical Image Classification}

\author{Qingsong Wang}\thanks{Qingsong Wang and Jiaxing He contributed equally to this study.}
\address{Qingsong Wang, School of Mathematics, Jilin University, 130012, Changchun, P. R. China}
\email{qswang21@mails.jlu.edu.cn}

\author{Jiaxing He}
\address{Jiaxing He, School of Mathematics, Jilin University, 130012, Changchun, P. R. China}
\email{547337872@qq.com}

\author{Bingzhe Hou}
\address{Bingzhe Hou, School of Mathematics, Jilin University, 130012, Changchun, P. R. China}
\email{houbz@jlu.edu.cn}

\author{Tieru Wu}
\address{Tieru Wu, School of Mathematics, Jilin University, 130012, Changchun, P. R. China\\
School of Artificial Intelligence, Jilin University, 130012, Changchun, P. R. China}
\email{wutr@jlu.edu.cn}

\author{Yang Cao}
\address{Yang Cao, School of Mathematics, Jilin University, 130012, Changchun, P. R. China}
\email{caoyang@jlu.edu.cn}

\author{Cailing Yao}
\address{Cailing Yao, School of Mathematics, Jilin University, 130012, Changchun, P. R. China}
\email{1290279144@qq.com}
	
\date{}
\subjclass[2020]{55N31, 68T09.}
\keywords{Persistent homology, multi-filtration, Gaussian kernels, medical imaging.}
\thanks{}
\begin{abstract}
Building  practical filtrations on objects to detect topological and geometric features is an important task in the field of Topological Data Analysis (TDA). In this paper, leveraging the ability of the Laplacian of Gaussian operator to 
enhance the boundaries of medical images, we define the G-LoG (Gaussian-Laplacian of Gaussian) bi-filtration to generate the features more suitable for multi-parameter persistence module. 
By modeling volumetric images as bounded functions, then we prove the interleaving distance on the persistence modules obtained from our bi-filtrations  on the bounded functions is stable with respect to the maximum norm of the bounded functions.  Finally, we conduct  experiments on the MedMNIST dataset, comparing our bi-filtration against single-parameter filtration and the established deep learning baselines, including Google AutoML Vision, ResNet, AutoKeras and auto-sklearn. Experiments results demonstrate that our bi-filtration significantly outperforms single-parameter filtration.  Notably, a simple Multi-Layer Perceptron (MLP) trained on the topological features generated by our bi-filtration achieves performance comparable to complex deep learning models trained on the original dataset.
\end{abstract}
\maketitle

\section{Introduction}
Over the last few years, Topological Data Analysis (TDA), especially persistent homology, has demonstrated significant utility in both theoretical and applied domains (see surveys \cite{zomorodian2005topology, Leo-2020, CV-2021} and the references therein).
Persistent homology is generated by a filtration, where different filtrations capture distinct topological features. 
In the single-parameter setting, four well-known filtrations are commonly used: the Vietoris-Rips, \v{C}ech and alpha filtrations for point clouds, and the lower-star filtration for cubical complexes.

Since a single filtration can not often  adequately capture enough structures, the concept of multi-parameter persistent homology  becomes crucial. Unlike the single persistence,  Carlsson and  Zomorodian  proved there is no analogous complete discrete invariant for the multi-parameter module \cite{Carls-2009}. Over the years, several methods have been proposed to construct multi-parameter filtrations for point clouds. For instance, Lesnick and McCabe focus on nerve models of subdivision bi-filtrations\cite{lesnick2024nerve}, Alonso et al. proposed the Delaunay bi-filtrations \cite{alonso2024delaunay} and Corbet et al.  proposed the rhomboid bi-filtration \cite{Re-2023}. These approaches can be found in \cite{Blum-2022} for more details.  

However, until now, methods that directly constructing bi-filtrations from images have remained scarce. In \cite{Math-2020}, Carri\`{e}re and Blumberg built bi-filtrations from a dataset of paired images, where each pair comprises imaging data of human tissue samples from patients with breast cancer. In \cite{JBTY-2025}, He et al. constructed mix-GENEO bi-filtration on MNIST and demonstrated the superiority of multi-parameter filtrations over single-parameter ones. Given that GENEO operators require careful selection in specific ways \cite{Bergo-2019},  there is a need for more accessible alternatives. Since the Gaussian filtered image eliminates the noise and Laplacian operator can detect edges of images, the Laplacian-of-Gaussian (LoG) has become a cornerstone operator in edge detection and texture enhancement, we propose a simpler and more efficient bi-parameter filtrations, named G-LoG. Then we evaluate their performances on the MedMNIST (v2) dataset \cite{YSWLZKPN-2023}. Figures \ref{fig:tissue2d} and \ref{fig:organ3d} illustrate our G-LoG bi-filtration method on 2D and 3D medical images, respectively. 

\begin{figure}[H]
	\centering
	\includegraphics[width=0.95\textwidth]{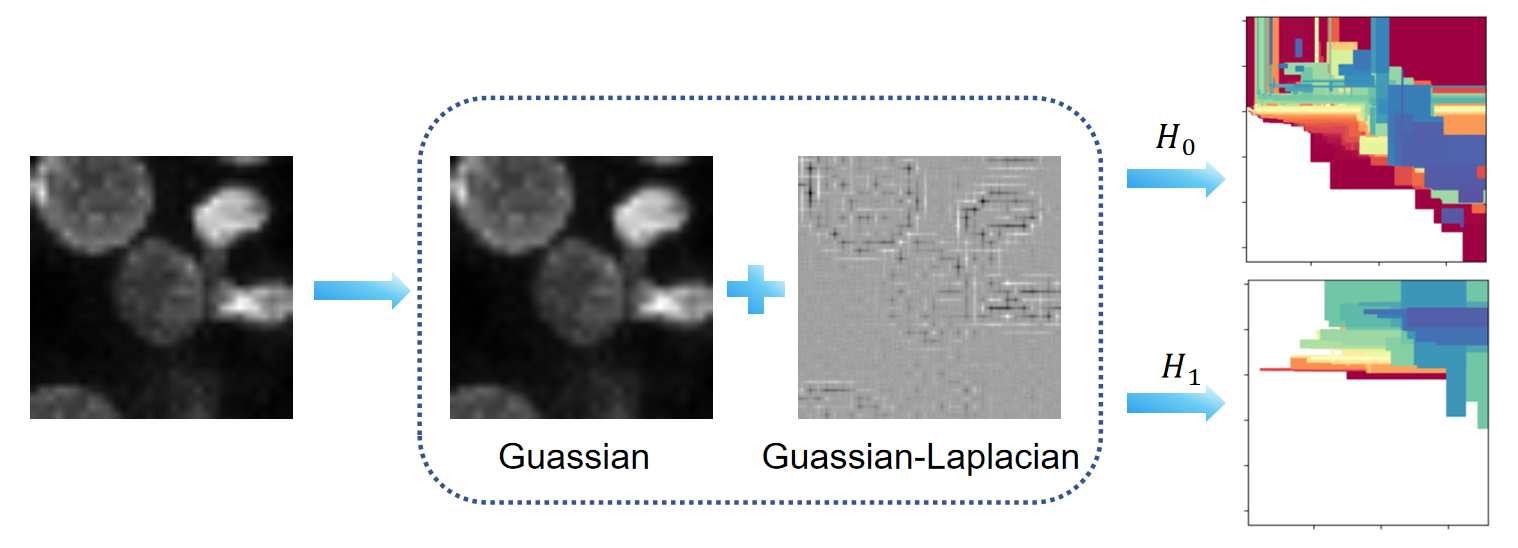}
	\caption{Bi-parameter persistence modules $H_0$ and $H_1$ generated by G-LoG bi-filtration from a medical tissue image.}
	\label{fig:tissue2d}
\end{figure}

\begin{figure}[H]
	\centering
	\includegraphics[width=0.95\textwidth]{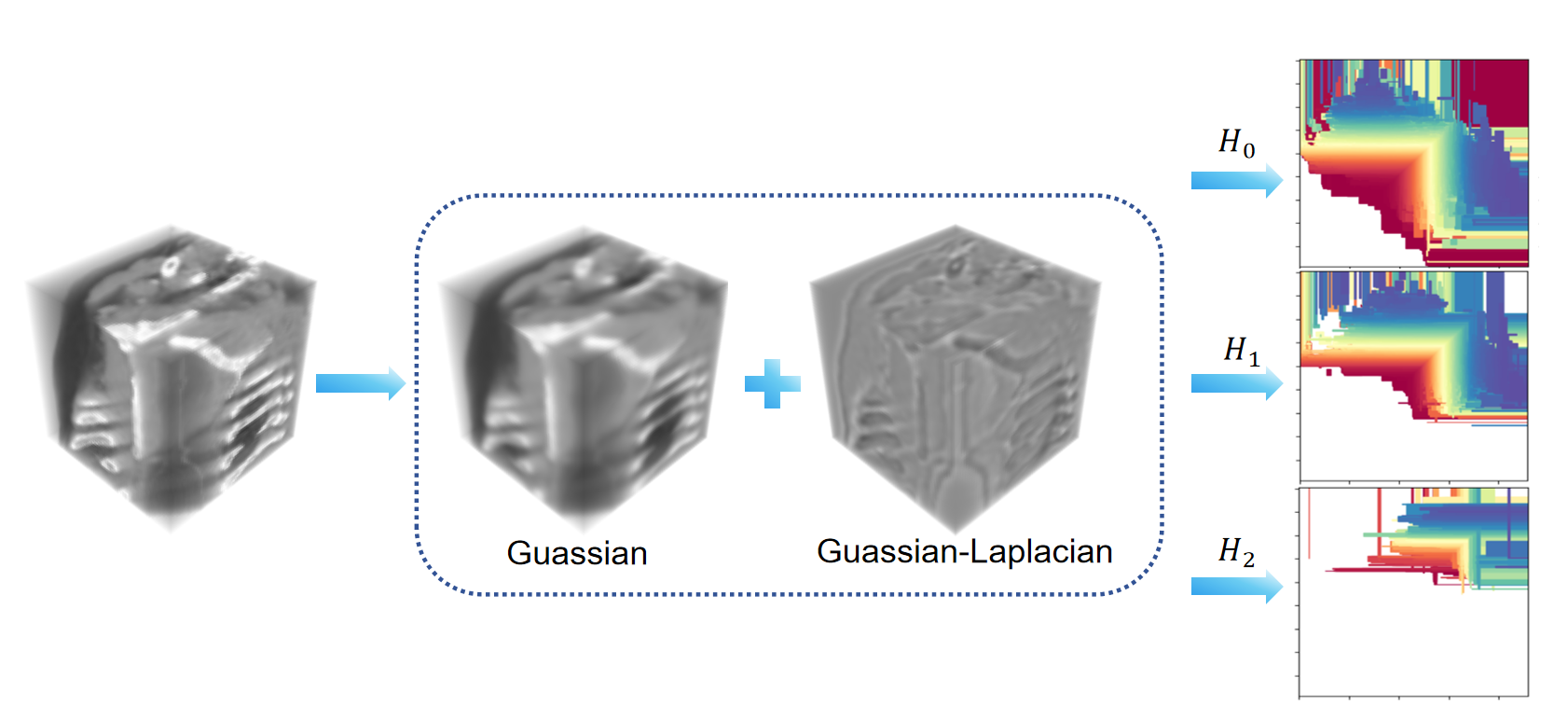}
	\caption{Bi-parameter persistence modules $H_0$, $H_1$ and $H_2$ generated by G-LoG bi-filtration from a 3D medical organ volume.}
	\label{fig:organ3d}
\end{figure}

\subsection{Overview}

In the recent years, deep learning has rapidly become a cornerstone of medical analysis. In medical image analysis, the use of deep learning to discover or learn informative features plays an essential role across a wide range of tasks. Nevertheless, the field still faces enduring challenges, including topological uncertainty \cite{SHDSELB-2025}, issues associated with high-dimensional data \cite{WHC-2021}, interpretability and the reliance on vast amounts of labeled data \cite{SK-2022}. Topological Data Analysis (TDA), a powerful method for characterizing the connectivity and intrinsic features of medical images, has attracted widespread attention.


Topological Data Analysis, which can be dated back to 1990s,  with methods such as Morse-Smale complexes, Reeb graphs and size theory being introduced and studied. During this period,  Ferri,  Frosini, Landi 
and their collaborators also proposed early versions of the rank invariant 
in degree $0$ and of persistent homotopy groups in multi-parameter persistence (see survey \cite{BDFFGLPS-2008} and the references therein). Soon after,  Edelsbrunner, Letscher, and  Zomorodian 
invented a version of homology theory named persistent homology which provides insights into the topological features that persist across
different scales \cite{Edel-Lets-Zo-2000}. Until now, TDA methods have been investigated in several medical fields, including neurology, cardiology, hepatology, genomics and single-cell transcriptomics, drug discovery, evolutionary and protein structure analysis \cite{SFHQLJCE-2023}. For example,  Hu et al. utilized discrete Morse theory to segment fine-grained structures in biomedical images \cite{HWFSC-2021}, while  Shen et al. introduced knot data analysis in \cite{SFLLWW-2024knot} for capturing local structures and connectivity in data.  Casaclang-Verzosa et al. \cite{CSKCTBABNM-2019} characterized the natural progression of aortic stenosis using TDA in cardiovascular research.

Persistent homology, as a ubiquitous tool in TDA which can be vectorized as the feature (such as persistence landscape \cite{Peter-2015}, persistence images \cite{Adams-Emer-Kir-2017}, betti curves \cite{CL-2022}, kernel methods \cite{RHBK-2015}, B-spline grids \cite{DLZZL-2024}) has been successfully used in medical image analysis. For example,  Crawford et al. designed the smooth Euler characteristic transform
to quantify magnetic resonance images  of tumors \cite{CMCMR-2020}.    Yadav et al. employed persistent homology  in histopathological cancer detection \cite{YADGC-2023}.

There are also several works considered  applications of multi-parameter persistence modules in medical image analysis. Vipond et al. employed the multi-parameter persistence landscape, constructed via radius-codensity bifiltrations, to study immune cell locations in digital histology images from head and neck cancer \cite{OJPMUCHH-2021}.  Carri\`{e}re and  Blumberg proposed multi-parameter persistence images \cite{Math-2020} and utilized this vectorization method to analyze quantitative immunofluorescence images.

The construction of most medical imaging datasets relies on prior knowledge from computer vision or the medical domain, making these datasets difficult to be used directly. To avoid this issue, two versions of the MedMNIST dataset \cite{YSWLZKPN-2023, medmnistv1} have been successively released. To establish a benchmark in this field, the developers employed several types of models as baseline classifiers for MedMNIST: ResNet \cite{HZRS-2016} and various AutoML models (including auto‑sklearn \cite{FEFLH-2022}, AutoKeras \cite{JSH-2019} and Google AutoML Vision \url{https://cloud.google.com/automl}). In recent years, many studies have aimed to improve classification performance on MedMNIST by modifying network architectures, such as FPVT \cite{LLCLC-2022}, MedViT \cite{MAKSA-2023} and C-Mixer \cite{SJZ-2025}.  In \cite{NKKC-2025},  Nuwagira et al. validated the effectiveness of Top-Med on MedMNIST, they showed that integrating topological feature vectors can enhance the accuracy and robustness of deep learning models.  Different from recent authors’ efforts to validating the learning ability of the network, we simply employ a multilayer perceptron (MLP) to classify features extracted from MedMNIST (v2) via multi-parameter persistent homology. In this paper, we use the results from MedMNIST (v2) \cite{YSWLZKPN-2023} and Top-Med \cite{NKKC-2025}  as baselines.

\subsection{Motivation}
Choosing an  appropriate multi-parameter filter function plays an important role in the classification task of multi-parameter persistent homology. If we select unsuitable filter function $\vec{\gamma}=(\gamma^1,\gamma^2)$ to obtain multi-parameter persistent homology, the multi-parameter persistent homology may yield results comparable to single-parameter persistent homology induced by $\gamma^1$ and $\gamma^2$ distinctly. Then we will fail to exploit the benefits of multi-parameter persistent homology. We construct a bi-parameter filter function for image processing based on the following three points.
\begin{enumerate}
	\item[(1)] To extract features more suitable for persistent homology, we employ two approaches: first, we use sublevel set functions to capture the original persistent homology features of the image; second, to better adapt the image for classification tasks within the persistent homology framework, we utilize the Laplacian operator to extract image edges. 
	\item[(2)] The level sets of the two parameters we select should intersect. If the multi-parameter filter functions we construct are ``independent", then multi-parameter filtration is essentially single-parameter in nature. We provide the following example to demonstrate that the bi-parameter persistence module along each direction can be decomposed into the direct sum of the single-parameter persistence modules under certain conditions (the relevant definitions for the example are provided in the Preliminaries).
	\begin{example}\textup{(Essential single-parameter)}
		
		Let $\mathcal{M}$ and $\mathcal{N}$ be two compact, disjoint $n$-dimensional submanifolds of $\mathbb{R}^{n}$. Let $f:\mathbb{R}^n\rightarrow\mathbb{R}$ and $g:\mathbb{R}^n\rightarrow\mathbb{R}$ be two continuous sublevel set functions satisfying that 
		\begin{align*}
			&f|_{\mathcal{M}}\geq 0, f|_{\mathbb{R}^{n}\setminus\mathcal{M}}=0,\\
			&g|_{\mathcal{N}}\geq 0,\  g|_{\mathbb{R}^{n}\setminus\mathcal{N}}=0.
		\end{align*} We provide Figure \ref{essential_1_para} as an example of the above definition.
		\begin{figure}[H]
			\centering
			\includegraphics[width=0.95\textwidth]{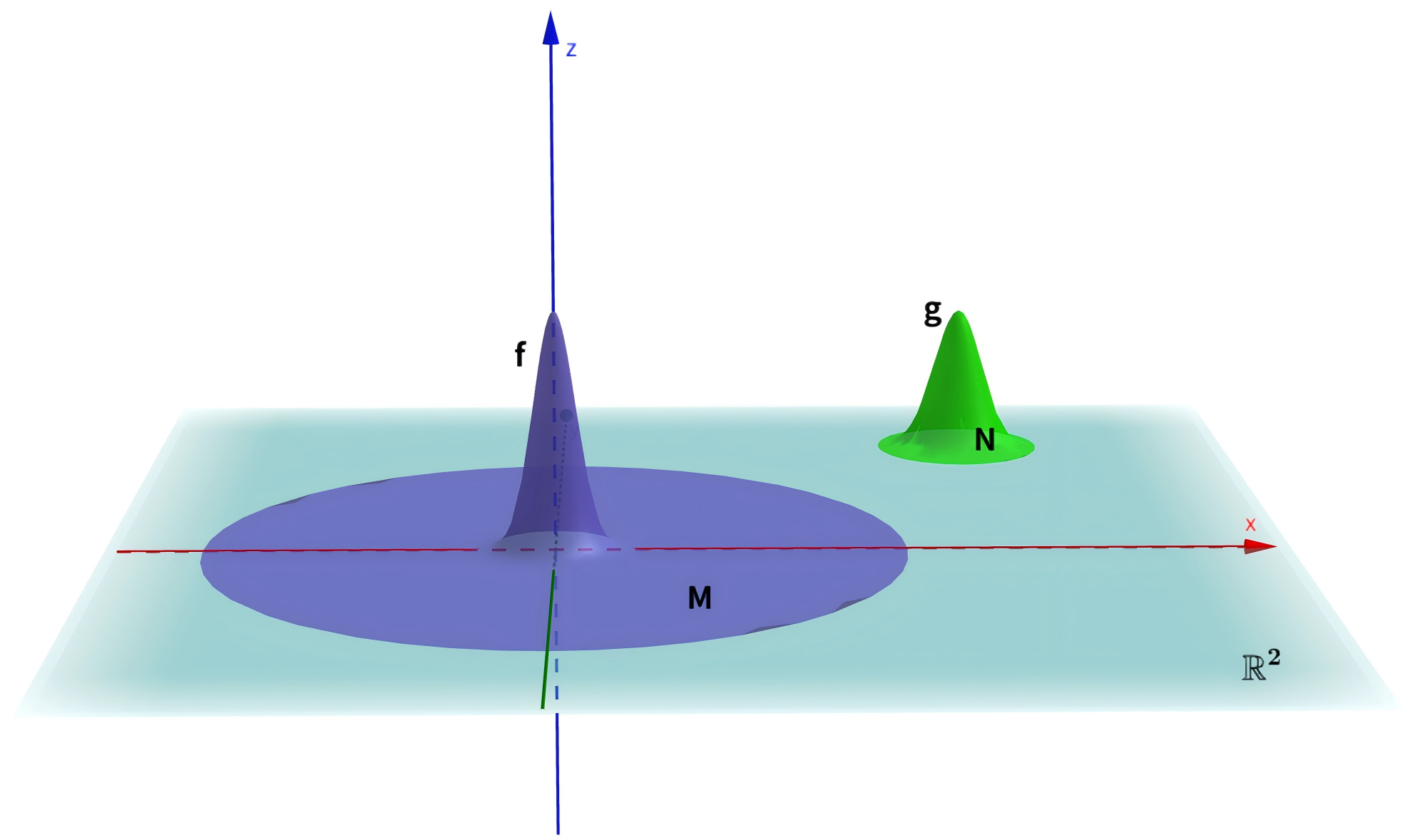}
			\caption{The preimages of the purple and green regions correspond to the manifolds $\mathcal{M}$ and $\mathcal{N}$, respectively.}
			\label{essential_1_para}
		\end{figure}
		Let $\mathcal{M}_i =\{x\in \mathbb{R}^n; f(x)\leq i\}$, $\mathcal{N}_j = \{x\in \mathbb{R}^n; g(x)\leq j\}$, where $i,j\in\mathbb{R}$. Since $\mathbb{R}^n = \mathcal{M}_i\cup \mathcal{N}_j$, we can get the following exact sequence, i.e., the Mayer-Vietoris sequence,
		\begin{align*}
			\cdots\rightarrow H_{k}(\mathcal{M}_i\cap \mathcal{N}_j)\xrightarrow{\Phi}H_k(\mathcal{M}_i)\oplus H_k(\mathcal{N}_j)\xrightarrow{\Psi}H_k(\mathbb{R}^n)\xrightarrow{\partial}H_{k-1}(\mathcal{M}_i\cap \mathcal{N}_j)\\
			\rightarrow \cdots  \rightarrow H_0(\mathbb{R}^n)\rightarrow 0.
		\end{align*}
		When $k\geq 1$, $H_k(\mathbb{R}^n)=0$. Then we have $H_k(\mathcal{M}_i\cap \mathcal{N}_j)\cong H_k(\mathcal{M}_i)\oplus H_k(\mathcal{N}_j)$ for $k\geq 1$.
		Define the bi-parameter persistence module $(M,\pi)$ by setting $M_{i,j}(f,g)=H_{*}(\mathcal{M}_i\cap \mathcal{N}_j)$, and define the single-parameter persistence module $(M(f,g)|_{\vec{l}},\pi(f,g)|_{\vec{l}})$ by $$(M(f,g)|_{\vec{l}})_t:=M_{at+b},\  (\pi(f,g)|_{\vec{l}})_{s,t}:=\pi(f,g)|_{as+b,at+b},$$ where $l\in\mathbb{R}^n$ is a line parametrized by $t\in\mathbb{R}\mapsto at+b$ with $a\in (\mathbb{R_{+}})^n\setminus\{0\}$.
		Now we define single-parameter persistence modules $(M(f),\pi(f))$ and $(M(g),\pi(g))$ by $M_{i}(g)=H_{*}(\mathcal{M}_i)$ and $M_{j}(f)=H_{*}(\mathcal{N}_j)$, respectively. 
		
		Let $\vec{u}=(u_1,u_2)$, $\vec{v}=(v_1,v_2)$. There exist $\vec{l'}=a't+b'$, $s'$ and $t'$ such that $(u_1,u_2)=a's'+b'$, $(v_1,v_2)=a't'+b'$, then we have the following commutative diagram,
		\begin{center}
			\begin{tikzcd}[column sep=huge]
				(M(f,g)|_{\vec{l'}})_{s'} \arrow{d}{\cong} \arrow{rr}{\pi(f,g)_{s',t'}} && (M(f,g)|_{\vec{l'}})_{t'} \arrow{d}{\cong}\\
				{M(f)_{u_1}\oplus M(g)_{u_2}} \arrow{rr}{\pi(f)_{u_1,v_1}\oplus \pi(g)_{v_1,v_2} }        && {M(f)_{u_2}\oplus V(g)_{v_2}}
			\end{tikzcd}
		\end{center}
		Then we have $\mathbb{M}(f,g)|_{\vec{l'}}\cong \mathbb{M}(f)\oplus \mathbb{M}(g)$. 
		
		In this case, the bi-parameter persistence module decomposes exactly into the direct sum of two single-parameter persistence modules in each direction.  Consequently, performing classification using this bi-parameter filtration yields no fundamental improvement or difference compared to direct classification using two independent single-parameter persistent homology.
	\end{example}
	\item[(3)] Driven by motivation (2), the two selected filter functions should be defined such that the intersection of their sublevel sets is non-empty; therefore, we apply Gaussian smoothing to both the voxel values and the Laplacian operator (Laplacian of Gaussian, LoG).
\end{enumerate}

\subsection{Contributions}

In this paper, we provide a framework to build bi-parameter filtrations on volumes. 
\begin{enumerate}
	\item[$\bullet$] G-LoG Bifiltration \& Stability: We define the G-LoG bi-filtration and we prove the interleaving distance on the persistence modules obtained from our bi-filtration on the bounded functions is stable with respect to the maximum norm of the bounded functions.
	
	\item[$\bullet$] Experimental Validation: We conduct experiments on the MedMNIST dataset to assess the performance of our bi-filtration in data-rich environments. Our key findings include, 
	\begin{enumerate}
		\item[$\bullet$] Superiority over single-parameter persistent homology: Our results demonstrate that our bi-filtration outperforms single-parameter persistent homology.
		
		\item[$\bullet$]  Performance in 2D Image Classification: We achieve results competitive with several established baseline methods.
		
		\item[$\bullet$] Effectiveness in 3D Image Classification: For 3D image classification, our method yields competitive performance compared to leading baseline approaches.
	\end{enumerate}
\end{enumerate}

To foster further developments at the intersection of multi-parameter persistent homology, we release our source code under:\url{https://github.com/HeJiaxing-hjx/G-LoG-bifiltration-for-medical-imaging-classification.git}.

\section{Preliminaries}

In this section, we will introduce some definitions and properties used in this paper.

Let $\mathbb{R}$ be the set of real numbers. For vectors $\boldsymbol{s}=(s_{1},\cdots,s_{m})$, $\boldsymbol{t}=(t_{1},\cdots,t_{m})$ in $\mathbb{R}^{m}$, there is a natural partial order on $\mathbb{R}^{m}$ by taking $\boldsymbol{s}\leq_m \boldsymbol{t}$ if and only if $s_{i} \leq t_{i}$ for all $1 \leq i \leq m$. Consider the sublevel set filtration $$
X^{\vec{\gamma}}_{\boldsymbol{s}}:=\{\ x\in X\ |\ \vec{\gamma}(x)\leq_{m}  \boldsymbol{s}\ \}
$$ with natural inclusion $\iota_{\boldsymbol{s},\boldsymbol{t}}$, where $\vec{\gamma}=(\gamma^1,\cdots,\gamma^m)$ which we called filter function. 

We denote $M_{\boldsymbol{s}}=H_{*}(X_{\boldsymbol{s}}; \mathbb{F})$. In this paper, we only consider $X=\mathbb{R}^n$.
\begin{definition}
	A persistence module $(M,\pi)$ is a family of $\mathbb{F}$-modules $\{M_{\boldsymbol{s}}\}_{\boldsymbol{s}\in \mathbb{R}^m}$ together with homomorphisms $\{\pi_{\boldsymbol{s},\boldsymbol{t}}:M_{\boldsymbol{s}}\rightarrow M_{\boldsymbol{t}}\}_{\boldsymbol{s},\boldsymbol{t}\in \mathbb{R}^m, \boldsymbol{s}\leq_m \boldsymbol{t}}$  such that  $\pi_{\boldsymbol{s},\boldsymbol{u}}=\pi_{\boldsymbol{t},\boldsymbol{u}}\circ\pi_{\boldsymbol{s},\boldsymbol{t}}$ and $\pi_{\boldsymbol{s},\boldsymbol{s}}=\text{id}$ for any $\boldsymbol{s}\leq_m\boldsymbol{t}\leq_{m}\boldsymbol{u}$.
\end{definition}
We denote $(M,\pi)$ by $\mathbb{M}$ in brief and we also call $\{\pi_{\boldsymbol{s},\boldsymbol{t}}\}$ transition maps of $\mathbb{M}$. Let $(M,\pi)$, $(M',\pi')$ be two persistence modules.
\begin{definition}
	A persistence morphism $\varphi:(M,\pi)\rightarrow(M',\pi')$ is a family of linear maps $h_{\boldsymbol{s}}:M_{\boldsymbol{s}} \rightarrow M'_{ \boldsymbol{s}}$ such that the following diagram commutes for all $\boldsymbol{s}\leq_m  \boldsymbol{t}$,
	\begin{center}
		\begin{tikzcd}
			M_{\boldsymbol{s}} \arrow{d}{h_{\boldsymbol{s}}} \arrow{r}{\pi_{\boldsymbol{s},\boldsymbol{t}}} & M_{ \boldsymbol{t}} \arrow{d}{h_{\boldsymbol{t}}}\\
			M'_{\boldsymbol{s}} \arrow{r}{\pi'_{\boldsymbol{s},\boldsymbol{t}}}         & M'_{\boldsymbol{t}}
		\end{tikzcd}
	\end{center}
\end{definition}

Two persistence modules $\mathbb{M}=(M,\pi)$ and  $\mathbb{M}'=(M',\pi')$ are isomorphic if there exist two morphisms $h_1:\mathbb{M}\rightarrow \mathbb{M}'$ and $h_2:\mathbb{M}'\rightarrow \mathbb{M}$ such that both compositions $h_1\circ h_2$ and $h_2\circ h_1$ are the identity morphisms on the corresponding persistence modules, where the identity morphism on $\mathbb{M}$ is the identity on $M_{\boldsymbol{t}}$ for all $\boldsymbol{t}$.

\begin{definition}
	Let $(M,\pi)$, $(M',\pi')$ be two persistence modules. Their direct sum $(N,\theta)$ is the persistence module whose underlying modules are $N_{\boldsymbol{t}} = M_{\boldsymbol{t}}\oplus M'_{\boldsymbol{t}}$ and accordingly, $\theta_{\boldsymbol{s},\boldsymbol{t}}=\pi_{\boldsymbol{s},\boldsymbol{t}}\oplus\pi'_{\boldsymbol{s},\boldsymbol{t}}$.
\end{definition}

For a persistence module $(M,\pi)$ and $\boldsymbol{\epsilon}\in\mathbb{R}^m$, define a persistence module  $(M[\boldsymbol{\epsilon}],\pi[\boldsymbol{\epsilon}])$ by taking $(M[\boldsymbol{\epsilon}])_{\boldsymbol{s}}=M_{\boldsymbol{s}+\boldsymbol{\epsilon}}$ and $(\pi[\boldsymbol{\epsilon}])_{\boldsymbol{s},\boldsymbol{t}}=\pi_{\boldsymbol{s}+\boldsymbol{\epsilon},\boldsymbol{t}+\boldsymbol{\epsilon}}$. This new persistence module is called the $\boldsymbol{\epsilon}$-shift of $\mathbb{M}$. The map $\Phi^{\boldsymbol{\epsilon}}:(M,\pi)\rightarrow(M[\boldsymbol{\epsilon}],\pi[\boldsymbol{\epsilon}])$ defined by $\Phi^{\boldsymbol{\epsilon}}_{\boldsymbol{t}}=\pi_{\boldsymbol{t},\boldsymbol{t}+\boldsymbol{\epsilon}}$ is an $\boldsymbol{\epsilon}$-shift morphism of persistence modules.

Now let us introduce interleaving distance on the space of persistence modules. If $h$ is a homomorphism from $\mathbb{M}$ to $\mathbb{M}'$, then  $h(\boldsymbol{\epsilon}): \mathbb{M}(\boldsymbol{\epsilon})\rightarrow \mathbb{M'}(\boldsymbol{\epsilon})$.

\begin{definition}
	Given $\epsilon>0$, we say that two persistence modules $\mathbb{M}$ and $\mathbb{M}'$ are $\boldsymbol{\epsilon}$-interleaved if there exist two morphisms $h_1:\mathbb{M}\rightarrow \mathbb{M}'(\boldsymbol{\epsilon})$ and $h_2: \mathbb{M}'\rightarrow \mathbb{M}(\boldsymbol{\epsilon})$, such that $h_2(\boldsymbol{\epsilon})\circ h_1={\Phi}_{\mathbb{M}}^{2\boldsymbol{\epsilon}}$ and ${\Phi}_{\mathbb{M}'}^{2\boldsymbol{\epsilon}}=h_1(\boldsymbol{\epsilon})\circ h_2$, where $\mathbb{M}(\boldsymbol{\epsilon})$ is the shift module $\{M_{\boldsymbol{s}+\boldsymbol{\epsilon}}\}_{s\in\mathbb{R}^m}$, $\boldsymbol{\epsilon}=(\epsilon,\cdots,\epsilon)\in\mathbb{R}^m$, ${\Phi}_{\mathbb{M}}^{2\boldsymbol{\epsilon}}$ and ${\Phi}_{\mathbb{M}'}^{2\boldsymbol{\epsilon}}$ are the shift morphisms.
	
	The interleaving distance between two multi-parameter persistence modules $\mathbb{M}$ and $\mathbb{M}'$ is defined to be
	$$
	d_{I}(\mathbb{M},\mathbb{M}')=\text{inf}\{\epsilon>0\ |\ \mathbb{M}\  \text{and} \ \mathbb{M}' \  \text{are} \  \epsilon\text{-interleaved}\}.
	$$
\end{definition}

The main property of $d_I$ is that it is stable for multi-parameter filtrations that are obtained from the sublevel sets of functions. More precisely, given two continuous function $\vec{\gamma}_1$,$\vec{\gamma}_2:X\rightarrow\mathbb{R}^m$, we denote $\mathbb{M}(\vec{\gamma}_1)$ and $\mathbb{M}(\vec{\gamma}_2)$ by the multi-parameter persistence modules obtained from the corresponding filtrations $X^{\vec{\gamma}_1}$ and $X^{\vec{\gamma}_2}$, then
\begin{align*}
	d_{I}(\mathbb{M}(\vec{\gamma}_1),\mathbb{M}(\vec{\gamma}_2))\leq \|\vec{\gamma}_1-\vec{\gamma}_2\|_{\infty},
\end{align*}
where $$
{\Vert}\vec{\gamma}{\Vert}_{\infty}=\left\{
\begin{array}{rcl}
	\sup_{p\in{X}}{\Vert}\vec{\gamma}(p){\Vert}_{\infty}= \sup_{p\in{X}}\max\{|\gamma^1(p)|,\cdots,|\gamma^m(p)|\}    &   &   {\text{if}\   X\   \neq\    \emptyset},\\
	0  \quad \quad \      &   &   {\text{if}\   X\   =\    \emptyset}.     
\end{array}
\right.
$$

\begin{theorem} [\cite{Micheal-2015}] \label{stable_di}
	$d_{I}$ is stable.
\end{theorem}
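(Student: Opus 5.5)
The plan is to prove the inequality stated just above,
\begin{align*}
d_{I}(\mathbb{M}(\vec{\gamma}_1),\mathbb{M}(\vec{\gamma}_2))\leq \|\vec{\gamma}_1-\vec{\gamma}_2\|_{\infty},
\end{align*}
by exhibiting explicit interleavings built from inclusions of sublevel sets and then applying functoriality of homology. Set $\epsilon=\|\vec{\gamma}_1-\vec{\gamma}_2\|_{\infty}$ and $\boldsymbol{\epsilon}=(\epsilon,\dots,\epsilon)$. If $\epsilon=\infty$ there is nothing to prove. Otherwise it suffices to show that $\mathbb{M}(\vec{\gamma}_1)$ and $\mathbb{M}(\vec{\gamma}_2)$ are $\boldsymbol{\epsilon}'$-interleaved for every $\epsilon'>\epsilon$, since the interleaving distance is defined as an infimum over admissible $\epsilon'>0$. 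In fact the argument below works directly at $\epsilon$ itself whenever $\epsilon>0$. The case $\epsilon=0$ is covered by taking all $\epsilon'>0$.

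The first step is the containment of sublevel sets. Take $x\in X^{\vec{\gamma}_1}_{\boldsymbol{s}}$, so that $\gamma^i_1(x)\le s_i$ for every $i$. Then
\begin{align*}
\gamma^i_2(x)\le \gamma^i_1(x)+|\gamma^i_2(x)-\gamma^i_1(x)|\le s_i+\epsilon .
\end{align*}
Hence $X^{\vec{\gamma}_1}_{\boldsymbol{s}}\subseteq X^{\vec{\gamma}_2}_{\boldsymbol{s}+\boldsymbol{\epsilon}}$, and by symmetry $X^{\vec{\gamma}_2}_{\boldsymbol{s}}\subseteq X^{\vec{\gamma}_1}_{\boldsymbol{s}+\boldsymbol{\epsilon}}$. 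Call these inclusions $j^{12}_{\boldsymbol{s}}$ and $j^{21}_{\boldsymbol{s}}$, and define $h_1$ and $h_2$ levelwise as the maps they induce on $H_*(\,\cdot\,;\mathbb{F})$.

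The second step checks that $h_1$ and $h_2$ are morphisms into the shifted modules. For $\boldsymbol{s}\le_m\boldsymbol{t}$ the square of inclusions
\begin{align*}
X^{\vec{\gamma}_1}_{\boldsymbol{s}}\subseteq X^{\vec{\gamma}_1}_{\boldsymbol{t}}\subseteq X^{\vec{\gamma}_2}_{\boldsymbol{t}+\boldsymbol{\epsilon}}
\quad\text{and}\quad
X^{\vec{\gamma}_1}_{\boldsymbol{s}}\subseteq X^{\vec{\gamma}_2}_{\boldsymbol{s}+\boldsymbol{\epsilon}}\subseteq X^{\vec{\gamma}_2}_{\boldsymbol{t}+\boldsymbol{\epsilon}}
\end{align*}
commutes trivially, because all maps are inclusions of subsets of $X$. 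Applying $H_*$ therefore gives the commuting squares required in the definition of a persistence morphism $\mathbb{M}(\vec{\gamma}_1)\to\mathbb{M}(\vec{\gamma}_2)(\boldsymbol{\epsilon})$, and similarly for $h_2$.

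The third step verifies the interleaving identities. The composite
\begin{align*}
X^{\vec{\gamma}_1}_{\boldsymbol{s}}\subseteq X^{\vec{\gamma}_2}_{\boldsymbol{s}+\boldsymbol{\epsilon}}\subseteq X^{\vec{\gamma}_1}_{\boldsymbol{s}+2\boldsymbol{\epsilon}}
\end{align*}
is just the inclusion $\iota_{\boldsymbol{s},\boldsymbol{s}+2\boldsymbol{\epsilon}}$. By functoriality, $h_2(\boldsymbol{\epsilon})\circ h_1=\Phi^{2\boldsymbol{\epsilon}}_{\mathbb{M}(\vec{\gamma}_1)}$, and the symmetric identity holds for $\mathbb{M}(\vec{\gamma}_2)$. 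Hence the two modules are $\boldsymbol{\epsilon}$-interleaved, which gives $d_I\le\epsilon$.

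There is no substantial obstacle here. The only point needing care is bookkeeping: keeping the indices of the shifted modules $\mathbb{M}(\boldsymbol{\epsilon})$ consistent, and handling the degenerate cases ($\epsilon=0$, or $X=\emptyset$, where all modules vanish) through the infimum in the definition.
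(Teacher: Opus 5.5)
Your proof is correct and is the standard argument behind the cited result (Lesnick, Theorem 5.3). The sup-norm bound gives the inclusions $X^{\vec{\gamma}_1}_{\boldsymbol{s}}\subseteq X^{\vec{\gamma}_2}_{\boldsymbol{s}+\boldsymbol{\epsilon}}$ and $X^{\vec{\gamma}_2}_{\boldsymbol{s}}\subseteq X^{\vec{\gamma}_1}_{\boldsymbol{s}+\boldsymbol{\epsilon}}$, i.e.\ an $\boldsymbol{\epsilon}$-interleaving of the filtrations, which functoriality of $H_*(\,\cdot\,;\mathbb{F})$ turns into an $\boldsymbol{\epsilon}$-interleaving of the modules; the paper itself gives no proof and only cites Lesnick, so this matches what it relies on.
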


\section{G-LoG Bifiltration and its stability}

One reason for the popularity of the multi-parameter persistence modules in TDA is that the transformation of a data set to the multi-parameter persistence modules is stable (Lipschitz continuous) with respect to the interleaving distances. In this section, we give the definition of our G-LoG bi-filtration and prove its stability.

To construct bi-filtration from volumes and images, we adopt notation consistent with \cite{Bergo-2019}. We represent data as the function space $\Phi$, defined as a set of real-valued functions $\{\varphi_{i}\}_{i}$ mapping from a topological space $\mathbb{R}^n$ to $\mathbb{R}$, we have $\|\varphi_1-\varphi_2\|_{\infty}=\sup_{x\in\mathbb{R}^n}|\varphi_1(x)-\varphi_2(x)|$. 

Then we can give our definition of G-LoG bi-filtration. 

Let $x=(x_1,\cdots,x_n)\in \mathbb{R}^n$, and let $G$ be the Gaussian kernel defined on $\mathbb{R}^n$, 
\begin{align}\label{kerg}
	G(x_1,\cdots,x_n)=\exp\{\frac{-\sum_{i=1}^{n}x^{2}_{i}}{2\sigma^2}\},
\end{align} we have
\begin{align*}
	\triangle G(x_1,\cdots, x_n)=\sum_{i=1}^n\frac{\partial^2}{\partial^{2}x_{i}}G(x_1,\cdots,x_n)=\frac{\sum_{i=1}^{n}x^{2}_{i}-n\sigma^2}{\sigma^4}\exp\{\frac{-\sum_{i=1}^{n}x^{2}_{i}}{2\sigma^2}\}. 
\end{align*}
\begin{definition}
	Let $\varphi$ be a continous function from $\mathbb{R}^{n}$ to $\mathbb{R}$. We define  G-LoG bi-filtration on $\varphi$ by $\vec{\gamma}_{\psi}=(\gamma^{1}_{\varphi}, \gamma^{2}_{\varphi})$,  where
	\begin{align*}
		&\gamma^{1}_{\varphi}(x) = \int_{\mathbb{R}}\cdots\int_{\mathbb{R}}\varphi(x_1-\alpha_1,\cdots,x_n-\alpha_n)\cdot G(\alpha_1,\cdots,\alpha_n)d\alpha_1\cdots d\alpha_n,\\
		&\gamma^{2}_{\varphi}(x) = \int_{\mathbb{R}}\cdots\int_{\mathbb{R}}\varphi(x_1-\alpha_1,\cdots,x_n-\alpha_n)\cdot \triangle G(\alpha_1,\cdots,\alpha_n)d\alpha_1\cdots d\alpha_n.
	\end{align*}
\end{definition}
Then we can get the following stability of the interleaving distance on $\vec{\gamma}_{\varphi_1}$ and $\vec{\gamma}_{\varphi_2}$ with respect to the maximum norm on $\varphi_1$ and $\varphi_2$.

\begin{theorem}
	Let $\varphi_1$, $\varphi_2: \mathbb{R}^n\rightarrow \mathbb{R}$ be two continuous functions  and let $\mathbb{M}(\vec{\gamma}_{\varphi_1})$ and $\mathbb{M}(\vec{\gamma}_{\varphi_2})$ be the corresponding multi-parameter persistence modules induced by $\vec{\gamma}_{\varphi_1} = (\gamma^{1}_{\varphi_1},\gamma^{2}_{\varphi_1})$ and $\vec{\gamma}_{\varphi_2} = (\gamma^{1}_{\varphi_2},\gamma^{2}_{\varphi_2})$, respectively. Then, the following stability inequality holds,
	\begin{align*}
		d_I(\mathbb{M}(\vec{\gamma}_{\varphi_1}),\mathbb{M}(\vec{\gamma}_{\varphi_2}))\leq  \max((2\pi \sigma^2)^{\frac{n}{2}},\frac{2n(2\pi \sigma^2)^{\frac{n}{2}}}{\sigma^2})\cdot \|\varphi_1-\varphi_2\|_{\infty}
	\end{align*}
\end{theorem}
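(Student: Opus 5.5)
The plan is to reduce the statement to the stability of the interleaving distance, Theorem \ref{stable_di}, applied to the two filter functions $\vec{\gamma}_{\varphi_1},\vec{\gamma}_{\varphi_2}:\mathbb{R}^n\to\mathbb{R}^2$. Theorem \ref{stable_di} gives
\begin{align*}
d_I(\mathbb{M}(\vec{\gamma}_{\varphi_1}),\mathbb{M}(\vec{\gamma}_{\varphi_2}))\leq \|\vec{\gamma}_{\varphi_1}-\vec{\gamma}_{\varphi_2}\|_{\infty}=\sup_{x\in\mathbb{R}^n}\max\{|\gamma^1_{\varphi_1}(x)-\gamma^1_{\varphi_2}(x)|,\,|\gamma^2_{\varphi_1}(x)-\gamma^2_{\varphi_2}(x)|\}.
\end{align*}
It therefore suffices to bound each component difference separately by the corresponding constant times $\|\varphi_1-\varphi_2\|_\infty$.

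If $\|\varphi_1-\varphi_2\|_\infty=\infty$ there is nothing to prove, so I assume it is finite. I will also work in the setting of the abstract, where the images are bounded functions. Then both convolutions are well defined, since $G$ and $\triangle G$ are integrable. They are also continuous in $x$, by dominated convergence, so Theorem \ref{stable_di} applies. Both components are linear in $\varphi$, so with $h=\varphi_1-\varphi_2$ I get $\gamma^i_{\varphi_1}-\gamma^i_{\varphi_2}=h*K_i$, where $K_1=G$ and $K_2=\triangle G$. This gives the pointwise bound
\begin{align*}
|(h*K_i)(x)|\leq \|h\|_\infty\int_{\mathbb{R}^n}|K_i(\alpha)|\,d\alpha .
\end{align*}

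What remains is computing the two $L^1$ norms. For $K_1=G>0$, the standard Gaussian integral gives $\int G=(2\pi\sigma^2)^{n/2}$. For $K_2$, I use the formula for $\triangle G$ displayed before the definition together with the triangle inequality $\big||x|^2-n\sigma^2\big|\leq |x|^2+n\sigma^2$:
\begin{align*}
\int_{\mathbb{R}^n}|\triangle G|\leq \frac{1}{\sigma^4}\int_{\mathbb{R}^n}(|x|^2+n\sigma^2)G(x)\,dx .
\end{align*}
I evaluate the second moment coordinatewise. Since $\int x_i^2 e^{-x_i^2/2\sigma^2}dx_i=\sigma^2\sqrt{2\pi\sigma^2}$, we get $\int |x|^2G=n\sigma^2(2\pi\sigma^2)^{n/2}$. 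Hence $\int|\triangle G|\leq \frac{2n\sigma^2}{\sigma^4}(2\pi\sigma^2)^{n/2}=\frac{2n(2\pi\sigma^2)^{n/2}}{\sigma^2}$.

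Taking the maximum of the two constants and the supremum over $x$ yields the claimed inequality. The computations are routine. The only real obstacle is the well-definedness issue: the statement as written allows merely continuous $\varphi$, for which the convolutions may diverge. I would handle this by making the boundedness assumption from the abstract explicit, so that the filter functions are continuous and real-valued and Theorem \ref{stable_di} legitimately applies.
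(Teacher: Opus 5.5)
Your proposal is correct and follows the paper's proof step for step. Like the paper, you bound each component difference by $\|\varphi_1-\varphi_2\|_\infty$ times the $L^1$ norm of $G$ resp.\ $\triangle G$, control the latter via $\bigl||x|^2-n\sigma^2\bigr|\le |x|^2+n\sigma^2$ and the Gaussian second moment, take the maximum, and conclude with Theorem \ref{stable_di}. Your explicit boundedness assumption, which makes the convolutions well defined, is a sensible repair of a point the paper leaves implicit: the theorem is stated for merely continuous $\varphi$, and boundedness appears only in the abstract.
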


\begin{proof}
	Recall that the Gaussian integral yields
	\begin{align}\label{4.2}
		\int_{\mathbb{R}^n} \exp\{-\frac{\sum_{i=1}^{n}\alpha^{2}_{i}}{2\sigma^2}\} d\alpha_1 \cdots d\alpha_n = (2\pi \sigma^2)^{\frac{n}{2}}.
	\end{align}
	From this, we obtain
	\begin{align*}
		\|\gamma^{1}_{\varphi_1}-\gamma^{1}_{\varphi_2}\|_{\infty} &\leq \|\varphi_1-\varphi_2\|_{\infty} \int_{\mathbb{R}^n} |G(\alpha_1,\dots,\alpha_n)| d\alpha_1 \cdots d\alpha_n \\
		&= (2\pi \sigma^2)^{\frac{n}{2}} \|\varphi_1-\varphi_2\|_{\infty}.
	\end{align*}
	Similarly, for the second component of the bi-filtration, we have
	\begin{align*}
		&\|\gamma^{2}_{\varphi_1}-\gamma^{2}_{\varphi_2}\|_{\infty} \\
		&\leq \|\varphi_1-\varphi_2\|_{\infty} \int_{\mathbb{R}^n} \frac{1}{\sigma^4} \left( \left| \sum_{i=1}^{n}\alpha^{2}_{i} G(\alpha) \right| + \left| n\sigma^2 G(\alpha) \right| \right) d\alpha_1 \cdots d\alpha_n.
	\end{align*}
	From (\ref{4.2}), we have $\alpha_i\sim N(0,\sigma^2)$, and notice that the second moment is $\mathbb{E}[\sum^n_{i=1}\alpha^2_i] = n\sigma^2$, it follows that
	\begin{align*}
		&\frac{1}{\sigma^4} \left( \int_{\mathbb{R}^n} \left| \sum^n_{i=1}\alpha^2_i  \exp\{-\frac{\sum \alpha^2_i}{2\sigma^2}\}\right| d\alpha_1\cdots d\alpha_n + \int_{\mathbb{R}^n} n\sigma^{2} \exp\{-\frac{\sum \alpha^2_i}{2\sigma^2}\} d\alpha_1\cdots d\alpha_n \right) \\
		&= \frac{1}{\sigma^4} \left( n\sigma^{2} \cdot (2\pi \sigma^2)^{\frac{n}{2}} + n\sigma^2 \cdot (2\pi \sigma^2)^{\frac{n}{2}} \right) = \frac{2n(2\pi \sigma^2)^{\frac{n}{2}}}{\sigma^2}.
	\end{align*}
	Consequently, we establish the bound for the combined function $\vec{\gamma}_{\varphi}$:
	\begin{align*}
		\|\vec{\gamma}_{\varphi_1}-\vec{\gamma}_{\varphi_2}\|_{\infty} \leq \max \left\{ (2\pi \sigma^2)^{\frac{n}{2}}, \frac{2n(2\pi \sigma^2)^{\frac{n}{2}}}{\sigma^2} \right\} \cdot \|\varphi_1-\varphi_2\|_{\infty}.
	\end{align*}
	Finally, by Theorem \ref{stable_di} regarding the stability of the interleaving distance, we conclude,
	\begin{align*}
		d_I(\mathbb{M}(\vec{\gamma}_{\varphi_1}),\mathbb{M}(\vec{\gamma}_{\varphi_2})) &\leq \|\vec{\gamma}_{\varphi_1}-\vec{\gamma}_{\varphi_2}\|_{\infty}=\sup_{x\in\mathbb{R}^n}\|\vec{\gamma}_{\varphi_1}-\vec{\gamma}_{\varphi_2}\|_{\infty}\\
		&=\max\{\|\gamma^{1}_{\varphi_1}-\gamma^{1}_{\varphi_2}\|_{\infty},\|\gamma^{2}_{\varphi_1}-\gamma^{2}_{\varphi_2}\|_{\infty}\} \leq C \cdot \|\varphi_1-\varphi_2\|_{\infty},
	\end{align*}
	where $C = \max\left\{(2\pi \sigma^2)^{\frac{n}{2}}, \frac{2n(2\pi \sigma^2)^{\frac{n}{2}}}{\sigma^2}\right\}$.
\end{proof}

\begin{remark}
	To build bi-filtrations quickly, we use the approximate multi-parameter persistence modules $\tilde{M}_{\delta}^{MMA}$ defined in \cite{LCB-2025}, where $\delta>0$. They proved that for two continous functions $f$, $g: X\rightarrow \mathbb{R}^m$, the following stability holds, \begin{align*}
		d_I(\tilde{M}_{\delta}^{MMA}(f),\tilde{M}_{\delta}^{MMA}(g))\leq \|f-g\|_{\infty}+\delta,
	\end{align*} Let $f$ and $g$ be $\vec{\gamma}_{\varphi_1}$ and $\vec{\gamma}_{\varphi_2}$ respectively, we have the following stability,
	\begin{align*}
		&d_I(\tilde{M}_{\delta}^{MMA}(\vec{\gamma}_{\varphi_1}),\tilde{M}_{\delta}^{MMA}(\vec{\gamma}_{\varphi_2}))\\
		&\leq \max\{(2\pi \sigma^2)^{\frac{n}{2}},\frac{2n(2\pi \sigma^2)^{\frac{n}{2}}}{\sigma^2}\}\cdot \|\varphi_1-\varphi_2\|_{\infty}+\delta.
	\end{align*}
\end{remark}

\section{Experiments}

We conduct experiments on the MedMNIST dataset to evaluate the effectiveness of our G-LoG bi-filtration. We compare our results against all baseline machine learning models reported in MedMNIST \cite{YSWLZKPN-2023}, as well as the results from the single-parameter Topo-Med  approach \cite{NKKC-2025}. The bi-parameter cubical filtrations are implemented using the multipers library (\url{https://davidlapous.github.io/multipers/})  and the GUDHI library (\url{https://gudhi.inria.fr/}). We propose the following pipeline for feature generation by G-LoG bi-filtration (illustrated in Figure \ref{pipeline}).
\begin{figure}[ht]
	\centering
	\begin{tikzpicture}[
		item/.style={rectangle, draw=black, thick, minimum width=1.5cm, minimum height=0.8cm, font=\scriptsize\sffamily, align=center},
		arrow/.style={-{Stealth[scale=1.0]}, thick},
		label/.style={font=\tiny\itshape, color=black!80}
		]
		
		\node[item] (raw) {2D/3D\\ Medical \\ Image};
		\node[item, right=1.5cm of raw] (glog) {bi-filtered\\ simplicial\\ complexes};
		\node[item, right=1.5cm of glog] (tda) {bi-parameter \\ persistence\\ modules};
		\node[item, right=1.5cm of tda] (mlp) {bi-parameter\\
			persistence\\
			image};
		
		\draw[arrow] (raw) -- node[above, label] {G-LoG}
		node[below, label] {bi-filtration}(glog);
		
		\draw[arrow] (glog) -- node[above, label] {Homology}(tda); 
		
		\draw[arrow] (tda) -- node[above, label] {Vectorization} (mlp);
		
		\node[right=0.8cm of mlp, font=\footnotesize\sffamily] (out) {Result};
		\draw[arrow] (mlp) -- node[above, label] {MLP}(out);
		
	\end{tikzpicture}
	\caption{Classification pipeline using G-LoG bi-filtration}
	\label{pipeline}
\end{figure}
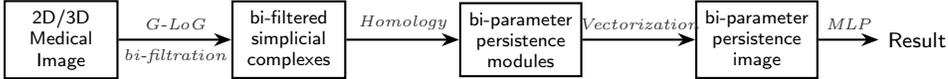

Our bi-filtrations are generated on a laptop equipped with an AMD Ryzen 7 5800H (Radeon Graphics) and 16GB of RAM. The MLP models are subsequently trained on a system featuring an Intel Core Ultra 9 185H (2.30 GHz) and 32GB of RAM.

{\bf Dataset} MedMNIST is a large-scale, standardized collection of biomedical images inspired by the MNIST format, comprising 12 2D datasets and 6 3D datasets. All images are preprocessed into a uniform resolution of $28 \times 28$ (for 2D) or $28 \times 28 \times 28$ (for 3D) and are provided with corresponding classification labels, making them accessible to users without prior domain expertise. Covering primary data modalities in biomedical images, MedMNIST is designed to perform classification on lightweight 2D and 3D images with various data scales (from 100 to 100,000) and diverse tasks (binary/multi-class, ordinal regression and multi-label).  With approximately 708,000 2D images and 10,000 3D samples in total, this dataset supports a wide range of research and educational needs in biomedical image analysis, computer vision, and machine learning.

{\bf Construction of Bi-parameter Filtrations} First, we convert the 2D color images into grayscale and normalize the pixel values from the range $[0, 255]$ to $[0, 1]$. Next, we construct the bi-parameter filtration functions. As established in our discussion on research motivation (2), if the intersection of the preimages of two sublevel set functions is too small, the extracted topological features degenerate into those obtained by applying two separate single-parameter filtrations, leading to a  reduction in corresponding feature count. To intuitively illustrate the importance of capturing this intersection, we employ Gaussian and Gaussian-Laplacian convolutions as the two sublevel set filtrations on medical images. The parameters for the Gaussian kernels are set to $\sigma=0$ (representing no convolution), $0.5$, $1$ and $1.5$, while the $\sigma$ for the Gaussian-Laplacian (LoG) kernel is fixed at $1$. Finally, we construct the bi-parameter persistence modules. While the GUDHI library is highly efficient for generating single-parameter cubical complexes, the multipers library \cite{multipers-2024} provides a user-friendly and high-performance framework for multi-parameter persistent homology, enabling the generation of approximate persistence modules and integrating various vectorization methods. By leveraging both GUDHI  \cite{gudhi-2014} and multipers, we are able to construct approximate bi-parameter persistence modules.

{\bf Running time} It takes about $0.1$ second to generate a bi-parameter persistence module from a $28 \times 28$ medical image, whereas a $28 \times 28 \times 28$ medical volume requires approximately 90 seconds.


{\bf Vectorization} We leverage the multipers library to extract vectorized features from the multi-parameter persistence modules. Specifically, we generate Multi-parameter Persistence Images (MPIs) \cite{Math-2020} by using a Gaussian kernel with a bandwidth of $0.01$ and a weight parameter $p=2$. For each $i$-dimensional persistent homology $H_i$, the resulting persistence image has a resolution of $50 \times 50$. For 2D image data, we generate MPIs for $H_0$ and $H_1$; after concatenation, each 2D image is represented by a $5000$-dimensional vector. For 3D volumetric data, we generate MPIs for $H_0$, $H_1$ and $H_2$. By concatenating these images, we obtain a $7500$-dimensional vector for each 3D volume.

{\bf Network Hyperparameters} Consistent with the methodology in \cite{NKKC-2025}, we employ a Multi-Layer Perceptron (MLP) with three hidden layers to train on the multi-parameter persistence images. The MLP architecture consists of an input layer, three hidden layers, and an output layer. Specifically, the three hidden layers are: a fully connected layer with 256 neurons and ReLU activation, followed by a fully connected layer with 128 neurons and ReLU activation, and a final hidden layer with 64 neurons and ReLU activation. The output layer utilizes the Softmax activation function. For model optimization, we use the cross-entropy loss function and the Adam optimizer with a learning rate of $0.001$. The training is conducted for 100 epochs with a batch size of 32. Additionally, an early stopping strategy is implemented with a patience of 20, and the model achieving the highest AUC score on the validation set is saved for final evaluation.

{\bf Evaluation Metrics} Consistent with prior research, we employ the average Area Under the Curve (AUC) and average Accuracy (ACC) as our evaluation metrics. AUC is a threshold-free metric to evaluate the continuous prediction scores, while ACC evaluates the discrete prediction labels given threshold (or argmax). The computational procedures for these metrics are identical to those described in \cite{YSWLZKPN-2023}.

\subsection{Results} As illustrated in Table \ref{table_2d_1}, Table \ref{table_2d_2} and Table \ref{table_3d}, we compare the performance of our proposed method against the baseline results reported in \cite{YSWLZKPN-2023} and \cite{NKKC-2025} on the MedMNIST dataset.
\begin{table}[H]\setlength{\tabcolsep}{1.4mm}
	\fontsize{6.5pt}{10.5pt}\selectfont
	\centering
	\begin{tabular}{|c|c|c|c|c|c|c|c|c|c|c|c|c|c|}
		\hline
		\multicolumn{2}{|c}{\multirow{1}{*}{Method}}
		&\multicolumn{2}{|c|}{PathMNIST}&\multicolumn{2}{c|}{ChestMNIST}
		&\multicolumn{2}{c|}{DermaMNIST}&\multicolumn{2}{c|}{OCTMNIST}&\multicolumn{2}{c|}{PneumoniaMNIST}
		&\multicolumn{2}{c|}{RetinaMNIST}\\
		\cline{3-14} 	
		\multicolumn{2}{|c}{} & \multicolumn{1}{|c|}{AUC} & ACC &  AUC & ACC &  AUC & ACC  &  AUC & ACC  &  AUC & ACC  &  AUC & ACC \\
		\cline{1-14}
		\multicolumn{2}{|c|}{ResNet-18 (28)}  & {0.983} & {0.907} &{0.768} & {0.947} & {0.917} &{0.735}  & {0.943} & {0.743} &{0.944} & {0.854} & {0.717} &{0.524}\\
		\cline{1-14}	
		\multicolumn{2}{|c|}{ResNet-18 (224)}  & {0.989} & {0.909} &{0.773} & {0.947} & \textbf{0.920} &{0.754} & {0.958} & {0.763} &{0.956} & {0.864} & {0.710} &{0.493}\\
		\cline{1-14}
		\multicolumn{2}{|c|}{ResNet-50 (28)}  & \textbf{0.990} & \textbf{0.911} &{0.769} & {0.947} & {0.913} &{0.735}& {0.952} & {0.762} &{0.948} & {0.854} & {0.726} &{0.528}\\
		\cline{1-14}	
		\multicolumn{2}{|c|}{ResNet-50 (224)}  & {0.989} & {0.892} &{0.773} & \textbf{0.948} & {0.912} &{0.731} & {0.958} & \textbf{0.776} &{0.962} & {0.884} & {0.716} &{0.511}\\
		\cline{1-14}
		\multicolumn{2}{|c|}{auto-sklearn}  & {0.934} & {0.716} &{0.649} & {0.779} & {0.902} &{0.719} &{0.887} & {0.601} & {0.942} &{0.855} & {0.690} & {0.515}\\
		\cline{1-14}
		\multicolumn{2}{|c|}{AutoKeras} & {0.959} & {0.834} &{0.742} & {0.937} & {0.915} &{0.749} & {0.955} & {0.763} &{0.947} & {0.878} & {0.719} &{0.503}\\
		\cline{1-14}
		\multicolumn{2}{|c|}{Google AutoML Vision}  & {0.944} & {0.728} &{0.778} & \textbf{0.948} & {0.914} &\textbf{0.768}& \textbf{0.963} & {0.771} &\textbf{0.991} & \textbf{0.946} & \textbf{0.750} &\textbf{0.531}\\
		\cline{1-14}
		\multicolumn{2}{|c|}{Topo-Med (MLP)}  & {0.942} & {0.683} &\textbf{0.787} & {0.530} & {0.904} &{0.669}& {0.710} & {0.450} &{0.845} & {0.762} & {0.728} &{0.458}  \\
		\cline{1-14}
		\multirow{4}{*}{\makecell{\\ Ours\\(MLP)\\}}  & {$\sigma=0$} & {0.955} & {0.753} &{0.606} & {0.947} & {0.809} &{0.703} & {0.859} & {0.522} &{0.907} & {0.817} & {0.611} &{0.518}\\
		\cline{2-14}
		& {$\sigma=0.5$} & {0.954} & {0.753} &{0.608} & {0.947} & {0.806} &{0.708}& {0.872} & {0.567} &{0.910} & {0.825} & {0.624} &{0.500}\\
		\cline{2-14}
		& {$\sigma=1$}& {0.940} & {0.708} &{0.610} & {0.947} & {0.815} &{0.700}& {0.869} & {0.548} &{0.906} & {0.825} & {0.643} &{0.498}\\
		\cline{2-14}
		& {$\sigma=1.5$}& {0.939} & {0.702} &{0.603} & {0.947} & {0.816} &{0.698} & {0.854} & {0.529} &{0.891} & {0.795} & {0.638} &{0.483}\\
		\hline
	\end{tabular}
	\caption{Comparison of seven deep learning baselines, single-parameter and our G-LoG bi-filtration on MedMNIST2D (\uppercase\expandafter{\romannumeral1}).}	
	\label{table_2d_1}
\end{table}

\begin{table}[H]\setlength{\tabcolsep}{1.5mm}
	\fontsize{6.5pt}{10.5pt}\selectfont
	\centering
	\begin{tabular}{|c|c|c|c|c|c|c|c|c|c|c|c|c|c|}
		\hline
		\multicolumn{2}{|c}{\multirow{1}{*}{Method}}
		&\multicolumn{2}{|c|}{BreastMNIST}&\multicolumn{2}{c|}{BloodMNIST}
		&\multicolumn{2}{c|}{TissueMNIST}&\multicolumn{2}{c|}{OrganAMNIST}&\multicolumn{2}{c|}{OrganCMNIST}
		&\multicolumn{2}{c|}{OrganSMNIST}\\
		\cline{3-14} 	
		\multicolumn{2}{|c}{} & \multicolumn{1}{|c|}{AUC} & ACC &  AUC & ACC &  AUC & ACC &  AUC & ACC&  AUC & ACC&  AUC & ACC\\
		\cline{1-14}
		\multicolumn{2}{|c|}{ResNet-18 (28)}  & {0.901} & \textbf{0.863} &\textbf{0.998} & {0.958} & {0.930} &{0.676} & {0.997} & {0.935} &{0.992} & {0.900} & {0.972} &{0.782}\\
		\cline{1-14}	
		\multicolumn{2}{|c|}{ResNet-18 (224)}  & {0.891} & {0.833} &\textbf{0.998} & {0.963} & {0.933} &{0.681}& \textbf{0.998} & \textbf{0.951} &\textbf{0.994} & \textbf{0.920} & {0.974} &{0.778}\\
		\cline{1-14}
		\multicolumn{2}{|c|}{ResNet-50 (28)}  & {0.857} & {0.812} &{0.997} & {0.956} & {0.931} &{0.680} & {0.997} & {0.935} &{0.992} & {0.905} & {0.972} &{0.770}\\
		\cline{1-14}	
		\multicolumn{2}{|c|}{ResNet-50 (224)}  & {0.866} & {0.842} &{0.997} & {0.956} & {0.931} &{0.680}  & \textbf{0.998} & {0.947} &{0.993} & {0.911} & \textbf{0.975} &{0.785}\\
		\cline{1-14}
		\multicolumn{2}{|c|}{auto-sklearn}  & {0.836} & {0.803} &{0.984} & {0.878} & {0.828} &{0.532}& {0.963}  & {0.762}  &{0.976} & {0.829} & {0.945} &{0.672}\\
		\cline{1-14}
		\multicolumn{2}{|c|}{AutoKeras} & {0.871} & {0.831} &\textbf{0.998} & {0.961} & \textbf{0.941} &\textbf{0.703}& {0.994} & {0.905} &{0.990} & {0.879} & {0.974} &\textbf{0.813}\\
		\cline{1-14}
		\multicolumn{2}{|c|}{Google AutoML Vision}  & \textbf{0.919} & {0.861} &\textbf{0.998} & \textbf{0.966} & {0.924} &{0.673}& {0.990} & {0.886} &{0.988} & {0.877} & {0.964} &{0.749}\\
		\cline{1-14}
		\multicolumn{2}{|c|}{Topo-Med (MLP)}  & {0.821} & {0.737} &{0.973} & {0.798} & {0.837} &{0.450} & {0.921} & {0.523} &{0.894} & {0.489} & {0.910} &{0.532}\\
		\cline{1-14}
		\multirow{4}{*}{\makecell{\\ Ours\\(MLP)\\}}  & {$\sigma=0$} & {0.820} & {0.731} &{0.977} & {0.847} & {0.849} &{0.537}&{0.934} & {0.591} & {0.935} &{0.593}& {0.927} & {0.605}\\
		\cline{2-14}
		& {$\sigma=0.5$} & {0.784} & {0.814} &{0.979} & {0.844} & {0.845} &{0.533}&{0.936} & {0.584} & {0.933} &{0.601}& {0.929} & {0.588}\\
		\cline{2-14}
		& {$\sigma=1$}& {0.752} & {0.750} &{0.977} & {0.840} & {0.840} &{0.529}&{0.931} & {0.562} & {0.931} &{0.588}& {0.918} & {0.573}\\
		\cline{2-14}
		& {$\sigma=1.5$}& {0.730} & {0.763} &{0.978} & {0.846}  &{0.837} &{0.528}&{0.927} & {0.559} & {0.922} &{0.565} & {0.910} & {0.565}\\
		\hline
		
	\end{tabular}
	\caption{Comparison of seven deep learning baselines, single-parameter and our G-LoG bi-filtration on MedMNIST2D (\uppercase\expandafter{\romannumeral2}).}	
	\label{table_2d_2}
\end{table}

\begin{table}[H]\setlength{\tabcolsep}{0.6mm}
	\fontsize{6.5pt}{10.5pt}\selectfont
	\centering
	\begin{tabular}{|c|c|c|c|c|c|c|c|c|c|c|c|c|c|}
		\hline
		\multicolumn{2}{|c}{\multirow{1}{*}{Method}}
		&\multicolumn{2}{|c|}{OrganMNIST3D}&\multicolumn{2}{c|}{NoduleMNIST3D}
		&\multicolumn{2}{c|}{FractureMNIST3D}&\multicolumn{2}{c|}{AdrenalMNIST3D}&\multicolumn{2}{c|}{VesselMNIST3D}&\multicolumn{2}{c|}{SynapseMNIST3D}\\
		\cline{3-14} 	
		\multicolumn{2}{|c}{} & \multicolumn{1}{|c|}{AUC} & ACC &  AUC & ACC &  AUC & ACC &  AUC & ACC&  AUC & ACC&  AUC & ACC \\
		\cline{1-14}
		\multicolumn{2}{|c|}{ResNet-18+2.5D}  & {0.977} & {0.788} &{0.838} & {0.835} & {0.587} &{0.451} & {0.718} & {0.772} &{0.748} &{0.846} &{0.634}&{0.696}\\
		\cline{1-14}	
		\multicolumn{2}{|c|}{ResNet-18+3D}  & \textbf{0.996} & \textbf{0.907} &{0.863} & {0.844} & {0.712} &{0.508} & {0.827} & {0.721} &{0.874} &{0.877} &{0.820}&{0.745}\\
		\cline{1-14}
		\multicolumn{2}{|c|}{ResNet-18+ACS}  & {0.994} & {0.900} &{0.873} & {0.847} & {0.714} &{0.497}  & {0.839} & {0.754} &{0.930} &{0.928} &{0.705}&{0.722}\\
		\cline{1-14}	
		\multicolumn{2}{|c|}{ResNet-50+2.5D}  & {0.974} & {0.769} &{0.835} & {0.848} & {0.552} &{0.397} & {0.732} & {0.763} &{0.751} &{0.877} &{0.669}&{0.735}\\
		\cline{1-14}
		\multicolumn{2}{|c|}{ResNet-50+3D}  & {0.994} & {0.883} &{0.875} & {0.847} & {0.725} &{0.494} & {0.828} & {0.745} &{0.907} &{0.918} &\textbf{0.851}&{0.795}\\
		\cline{1-14}
		\multicolumn{2}{|c|}{ResNet-50+ACS}  & {0.994} & {0.889} &{0.886} & {0.841} & {0.750} &{0.517} & {0.828} & {0.758} &{0.912} &{0.858} &{0.719}&{0.709}\\
		\cline{1-14}
		\multicolumn{2}{|c|}{auto-sklearn}  & {0.977} & {0.814} &\textbf{0.914} & \textbf{0.874} & {0.628} &{0.453}& {0.828} & {0.802} &{0.910} &{0.915} &{0.631}&{0.730}\\
		\cline{1-14}
		\multicolumn{2}{|c|}{AutoKeras}  & {0.979} & {0.804} &{0.844} & {0.834} & {0.642} &{0.458}& {0.804} & {0.705} &{0.773} &{0.894} &{0.538}&{0.724} \\
		\cline{1-14}
		\multicolumn{2}{|c|}{Topo-Med (MLP)}  & {0.837} & {0.554} &{0.808} & {0.736} & {0.653} &{0.480} & {0.837} & {0.554} &{0.808} & {0.736} & {0.653} &{0.480}\\
		\cline{1-14}
		\multirow{4}{*}{\makecell{\\ Ours\\(MLP)\\}}  & {$\sigma=0$} & {0.958} & {0.684} &{0.564} & {0.774} & {0.697} &{0.554}  & {0.742} & {0.789} &{0.813} & {0.887} & {0.783} &{0.796}\\
		\cline{2-14}
		& {$\sigma=0.5$} & {0.961} & {0.702} &{0.789} & {0.852} & {0.704} &{0.579} & \textbf{0.870} & \textbf{0.847} &{0.877} &{0.898} &{0.810}&\textbf{0.827}\\
		\cline{2-14}
		& {$\sigma=1$} & {0.953} & {0.656} &{0.838} & {0.852} & {0.753} &{0.579} & {0.867} & {0.829} &{0.915} &{0.908} &{0.805}&{0.813}\\
		\cline{2-14}
		& {$\sigma=1.5$} & {0.941} & {0.634} &{0.803} & {0.848} & \textbf{0.774} &\textbf{0.588}  & {0.850} & {0.829} &\textbf{0.933} &\textbf{0.937} &{0.752}&{0.776}\\
		\hline
	\end{tabular}
	\caption{Comparison of eight deep learning baselines, single-parameter and our G-LoG bi-filtration on MedMNIST3D (\uppercase\expandafter{\romannumeral1}).}	
	\label{table_3d}
\end{table}

{\bf Results for 2D dataset} As shown in the Table \ref{table_2d_1} and Table \ref{table_2d_2}, our model outperforms single-parameter persistent homology across all datasets except for the AUC scores on ChestMNIST, DermaMNIST and RetinaMNIST. We observe a performance increase of 5–10\% on most datasets, with a notable 41.7\% increase in ACC on the Chest dataset.

Although our performance on 2D datasets does not surpass the best-performing baseline models, our results on the PathMNIST dataset achieves an AUC of 95.5\% and an ACC of 75.3\%, which outperform Auto-sklearn (93.4\% AUC and 71.6\% ACC). On the ChestMNIST dataset, our ACC reaches 94.7\%, surpassing Auto-sklearn (77.9\%) and AutoKeras (93.7\%). This performance is equivalent to that of  ResNet-18 (28), ResNet-18 (224) and ResNet-50 (28), trailing only behind ResNet-50 (224) and Google AutoML Vision (94.8\%). Regarding the BreastMNIST dataset, our ACC score of 81.4\% is higher than both ResNet-50 (28) (81.2\%) and Auto-sklearn (80.3\%). Finally, on the TissueMNIST dataset, our AUC and ACC scores (84.9\% and 53.7\%) outperform the results from Auto-sklearn (82.8\% AUC and 53.2\% ACC).

Across the majority of datasets, $\sigma=0.5$ yields superior results compared to $\sigma=0, 1 \text{ and } 1.5$. This empirically validates our motivation (2): the necessity of achieving an appropriate intersection of multi-parameter sublevel sets. These results underscore the latent potential of persistent homology, demonstrating that features extracted via this topological approach alone are sufficient to train models.

{\bf Results for 3D dataset} It can be observed in Table \ref{table_3d} that our method extracts superior features compared to single-parameter persistent homology, while our results remain highly competitive with the baseline methods. We select $\sigma=0, 0.5, 1, 1.5$ to generate bi-parameter persistence features. It is evident that the AUC and ACC scores for $\sigma=0$ are consistently lower than those for $\sigma=0.5, 1, 1.5$, which validates our initial motivation (2). Specifically, our method achieves an AUC of 77.4\% and an ACC of 58.8\% on FractureMNIST3D; 87.0\% and 84.7\% on AdrenalMNIST3D;  93.3\% and 93.7\% on VesselMNIST3D, respectively. On the SynapseMNIST3D dataset, our ACC reaches 82.7\%. Overall, our approach outperforms the baseline models in both AUC and ACC on the FractureMNIST3D, AdrenalMNIST3D and VesselMNIST3D datasets, while also surpassing the baseline in terms of ACC on the SynapseMNIST3D dataset.


Consequently, by selecting appropriate filter functions, multi-parameter persistent homology can extract sufficient geometric and topological features for classification tasks, potentially even surpassing the performance of original features.

\section{Conclusions and future work}

In this paper, we introduce a bi-parameter persistence filtration called G-LoG. We theoretically demonstrate that the resulting persistence modules on volumetric images are stable under the interleaving distance with respect to the maximum norm of the image data. The persistence modules induced by the G-LoG bi-filtration are computationally efficient to generate. Our experimental results highlight the significant potential of multi-parameter persistence modules in biomedical image analysis.

In the future, we aim to extend our methodology in two directions primarily. First, we plan to develop a more robust framework for  filtrations with more parameters, such as three-parameter filtrations, to capture even more intricate topological features. We believe that by selecting more appropriate filtration parameters, superior performance in multi-parameter persistence modules can be achieved. Second, we intend to integrate our bi-filtration framework into optimization pipelines, enabling its application to a broader range of domains, including computer graphics and end-to-end deep learning architectures.



\bibliographystyle{plain}
\bibliography{reference}

@article{SFHQLJCE-2023,
  title={Topological data analysis in medical imaging: current state of the art},
  author={Singh, Y. and Farrelly, C. M. and Hathaway, Q. A. and Leiner, T. and Jagtap, J. and Carlsson, G. E. and Erickson, B. J.},
  journal={Insights Imaging},
  volume={14},
  number={1},
  pages={58},
  year={2023},
  publisher={Springer}
}

@article{WHC-2021,
  title={Topological machine learning for mixed numeric and categorical data},
  author={Wu, C. and Hargreaves, C. A.},
  journal={Int. J. Artif. Intell. T.},
  volume={30},
  number={05},
  pages={2150025},
  year={2021},
  publisher={World Scientific}
}

@article{SHDSELB-2025,
  title={Quantifying the Unknowns of Plaque Morphology: The Role of Topological Uncertainty in Coronary Artery Disease},
  author={Singh, Y. and Hathaway, Q. A. and Dinakar, K. and Shaw, L. J. and Erickson, B. and Lopez-Jimenez, F. and Bhatt, D. L.},
  journal={Mayo Clinic Proceedings: Digital Health},
  volume={3},
  number={2},
  pages={100217},
  year={2025},
  publisher={Elsevier}
}

@article{SK-2022,
  title={Convolutional neural networks in medical image understanding: a survey},
  author={Sarvamangala, D. R. and Kulkarni, R. V.},
  journal={Evol. Intell.},
  volume={15},
  number={1},
  pages={1--22},
  year={2022},
  publisher={Springer}
}

@article{BDFFGLPS-2008,
author = {Biasotti, S. and De Floriani, L. and Falcidieno, B. and Frosini, P. and Giorgi, D. and Landi, C. and Papaleo, L. and Spagnuolo, M.},
title = {Describing shapes by geometrical-topological properties of real functions},
year = {2008},
issue_date = {October 2008},
publisher = {Association for Computing Machinery},
address = {New York, NY, USA},
volume = {40},
number = {4},
issn = {0360-0300},
url = {https://doi.org/10.1145/1391729.1391731},
doi = {10.1145/1391729.1391731},
journal = {ACM Comput. Surv.},
month = oct,
articleno = {12},
numpages = {87},
}

@inproceedings{HWFSC-2021,
title={Topology-Aware Segmentation Using Discrete Morse Theory},
author={Hu, X. and Wang, Y. and Fuxin, L. and Samaras, D. and Chen, C.},
booktitle={International Conference on Learning Representations},
year={2021},
}

@article{SFLLWW-2024knot,
  title={Knot data analysis using multiscale Gauss link integral},
  author={Shen, L. and Feng, H. and Li, F. and Lei, F. and Wu, J. and Wei, G.},
  journal={Proceedings of the National Academy of Sciences},
  volume={121},
  number={42},
  pages={e2408431121},
  year={2024},
  publisher={National Academy of Sciences}
}

@article {Edel-Lets-Zo-2000,
    AUTHOR = {Edelsbrunner, H. and Letscher, D. and Zomorodian, A.},
     TITLE = {Topological persistence and simplification},
   JOURNAL = {Discrete Comput. Geom.},
  FJOURNAL = {Discrete \& Computational Geometry. An International Journal
              of Mathematics and Computer Science},
    VOLUME = {28},
      YEAR = {2002},
    NUMBER = {4},
     PAGES = {\ 511--533},
      ISSN = {0179-5376,1432-0444},
   MRCLASS = {52B55 (65D18)},
  MRNUMBER = {1949898},
MRREVIEWER = {H.\ W.\ Guggenheimer},
       DOI = {10.1007/s00454-002-2885-2},
       URL = {https://doi.org/10.1007/s00454-002-2885-2},
}

@article{CL-2022,
  title={Persistence curves: A canonical framework for summarizing persistence diagrams},
  author={Chung, Y. M. and Lawson, A.},
  journal={Advances in Computational Mathematics},
  volume={48},
  number={1},
  pages={6},
  year={2022},
  publisher={Springer}
}

@article{Adams-Emer-Kir-2017,
  title={Persistence images: A stable vector representation of persistent homology},
  author={Adams, H. and Emerson, T. and Kirby, M. and Neville, R. and Peterson, C. and Shipman, P. and Chepushtanova, S. and Hanson, E. and Motta, F. and Ziegelmeier, L.},
  journal={J. Mach. Learn. Res.},
  volume={18},
  number={8},
  pages={\ 1--35},
  year={2017}
}

@inproceedings{RHBK-2015,
  title={A stable multi-scale kernel for topological machine learning},
  author={Reininghaus, J. and Huber, S. and Bauer, U. and Kwitt, R.},
  booktitle={Proceedings of the IEEE conference on computer vision and pattern recognition},
  pages={4741--4748},
  year={2015}
}

@article{DLZZL-2024,
  title={Persistence B-spline grids: stable vector representation of persistence diagrams based on data fitting},
  author={Dong, Z. and Lin, H. and Zhou, C. and Zhang, B. and Li, G.},
  journal={Mach. Learn.},
  volume={113},
  number={3},
  pages={1373--1420},
  year={2024},
  publisher={Springer}
}

@article{Peter-2015,
  AUTHOR = {Bubenik, P.},
     TITLE = {Statistical topological data analysis using persistence
              landscapes},
   JOURNAL = {J. Mach. Learn. Res.},
  FJOURNAL = {Journal of Machine Learning Research (JMLR)},
    VOLUME = {16},
      YEAR = {2015},
     PAGES = {77--102},
      ISSN = {1532-4435,1533-7928},
   MRCLASS = {62G99 (55N35 65D18 68U05)},
  MRNUMBER = {3317230},
MRREVIEWER = {Patrizio\ Frosini},
}

@article{CMCMR-2020,
  title={Predicting clinical outcomes in glioblastoma: an application of topological and functional data analysis},
  author={Crawford, L. and Monod, A. and Chen, A. X. and Mukherjee, S. and Rabad{\'a}n, R.},
  journal={J. Am. Stat. Assoc.},
  volume={115},
  number={531},
  pages={1139--1150},
  year={2020},
  publisher={Taylor \& Francis}
}

@article{CSKCTBABNM-2019,
  title={Network tomography for understanding phenotypic presentations in aortic stenosis},
  author={Casaclang-Verzosa, G. and Shrestha, S. and Khalil, M. J. and Cho, J. S. and Tokodi, M. and Balla, S. and Alkhouli, M. and Badhwar, V. and Narula, J. and Miller, J. D. and others},
  journal={JACC: Cardiovascular Imaging},
  volume={12},
  number={2},
  pages={236--248},
  year={2019},
  publisher={American College of Cardiology Foundation Washington, DC}
}

@inproceedings{YADGC-2023,
  title={Histopathological cancer detection with topological signatures},
  author={Yadav, A. and Ahmed, F. and Daescu, O. and Gedik, R. and Coskunuzer, B.},
  booktitle={2023 IEEE International Conference on Bioinformatics and Biomedicine (BIBM)},
  pages={1610--1619},
  year={2023},
  organization={IEEE}
}

@article{Carls-2009,
  AUTHOR = {Carlsson, G. and Zomorodian, A.},
     TITLE = {The theory of multidimensional persistence},
   JOURNAL = {Discrete Comput. Geom.},
  FJOURNAL = {Discrete \& Computational Geometry. An International Journal
              of Mathematics and Computer Science},
    VOLUME = {42},
      YEAR = {2009},
    NUMBER = {1},
     PAGES = {\ 71--93},
      ISSN = {0179-5376,1432-0444},
   MRCLASS = {52C35 (68U05)},
  MRNUMBER = {2506738},
       DOI = {10.1007/s00454-009-9176-0}
}

@article{OJPMUCHH-2021,
author = {O. Vipond  and J. A. Bull  and P. S. Macklin  and U. Tillmann  and C. W. Pugh  and H. M. Byrne  and H. A. Harrington },
title = {Multiparameter persistent homology landscapes identify immune cell spatial patterns in tumors},
journal = {P. Natl. Acad. Sci. USA},
volume = {118},
number = {41},
year = {2021},
doi = {10.1073/pnas.2102166118},
URL = {https://www.pnas.org/doi/abs/10.1073/pnas.2102166118},
}

@inproceedings{Math-2020,
  author = {Carrie\`re, M. and Blumberg, A.},
 booktitle = {Advances in Neural Information Processing Systems},
  pages = {22432--22444},
 publisher = {Curran Associates, Inc.},
 title = {Multiparameter Persistence Image for Topological Machine Learning},
 url = {https://proceedings.neurips.cc/paper_files/paper/2020/file/fdff71fcab656abfbefaabecab1a7f6d-Paper.pdf},
 volume = {33},
 year = {2020},
address ={New York}
}

@article{Re-2023,
  AUTHOR = {Corbet, R. and Kerber, M. and Lesnick, M. and
              Osang, G.},
     TITLE = {Computing the multicover bifiltration},
   JOURNAL = {Discrete Comput. Geom.},
  FJOURNAL = {Discrete \& Computational Geometry. An International Journal
              of Mathematics and Computer Science},
    VOLUME = {70},
      YEAR = {2023},
    NUMBER = {2},
     PAGES = {376--405},
      ISSN = {0179-5376,1432-0444},
   MRCLASS = {55N31},
  MRNUMBER = {4627346},
MRREVIEWER = {Massimo\ Ferri},
       DOI = {10.1007/s00454-022-00476-8}
}

@article{Blum-2022,
  AUTHOR = {Blumberg, A. J. and Lesnick, M.},
     TITLE = {Stability of 2-parameter persistent homology},
   JOURNAL = {Found. Comput. Math.},
  FJOURNAL = {Foundations of Computational Mathematics. The Journal of the
              Society for the Foundations of Computational Mathematics},
    VOLUME = {24},
      YEAR = {2024},
    NUMBER = {2},
     PAGES = {385--427},
      ISSN = {1615-3375,1615-3383},
   MRCLASS = {55N31 (62R40)},
  MRNUMBER = {4733354},
       DOI = {10.1007/s10208-022-09576-6}
}

@Article{JBTY-2025,
title = {Mix-GENEO: A flexible filtration for multiparameter persistent homology detects digital images},
journal = {AIMS Math.},
volume = {10},
number = {10},
pages = {24153-24178},
year = {2025},
issn = {2473-6988},
doi = {10.3934/math.20251071},
url = {https://www.aimspress.com/article/doi/10.3934/math.20251071},
author = {J. He and B. Hou and T. Wu and Y. Xin},
}

@article{Bergo-2019,
  title={Towards a topological--geometrical theory of group equivariant non-expansive operators for data analysis and machine learning},
  author={Bergomi, M. G. and Frosini, P. and Giorgi, D. and Quercioli, N.},
  journal={Nat. Mach. Intell.},
  volume={1},
  number={9},
  pages={\ 423--433},
  year={2019},
  publisher={Nature Publishing Group UK London}
}

@article{YSWLZKPN-2023,
  title={Medmnist v2-a large-scale lightweight benchmark for 2d and 3d biomedical image classification},
  author={Yang, J. and Shi, R. and Wei, D. and Liu, Z. and Zhao, L. and Ke, B. and Pfister, H. and Ni, B.},
  journal={Sci. Data},
  volume={10},
  number={1},
  pages={41},
  year={2023},
  publisher={Nature Publishing Group UK London}
}

@inproceedings{medmnistv1,
    title={MedMNIST Classification Decathlon: A Lightweight AutoML Benchmark for Medical Image Analysis},
    author={Yang, J. and Shi, R. and Ni, B.},
    booktitle={IEEE 18th International Symposium on Biomedical Imaging (ISBI)},
    pages={191--195},
    year={2021}
}

@inproceedings{HZRS-2016,
  title={Deep residual learning for image recognition},
  author={He, K. and Zhang, X. and Ren, S. and Sun, J.},
  booktitle={Proceedings of the IEEE conference on computer vision and pattern recognition},
  pages={770--778},
  year={2016}
}

@article{FEFLH-2022,
  title={Auto-sklearn 2.0: Hands-free automl via meta-learning},
  author={Feurer, M. and Eggensperger, K. and Falkner, S. and Lindauer, M. and Hutter, F.},
  journal={Journal of Machine Learning Research},
  volume={23},
  number={261},
  pages={1--61},
  year={2022}
}

@inproceedings{JSH-2019,
  title={Auto-keras: An efficient neural architecture search system},
  author={Jin, H. and Song, Q. and Hu, X.},
  booktitle={Proceedings of the 25th ACM SIGKDD international conference on knowledge discovery \& data mining},
  pages={1946--1956},
  year={2019}
}

@inproceedings{LLCLC-2022,
  title={Feature pyramid vision transformer for medmnist classification decathlon},
  author={Liu, J. and Li, Y. and Cao, G. and Liu, Y. and Cao, W.},
  booktitle={2022 International joint conference on neural networks (IJCNN)},
  pages={1--8},
  year={2022},
  organization={IEEE}
}

@article{MAKSA-2023,
  title={{MedViT: a robust vision transformer for generalized medical image classification}},
  author={Manzari, O. N. and Ahmadabadi, H. and Kashiani, H. and Shokouhi, S. B. and Ayatollahi, A.},
  journal={Comput. Biol. Med.},
  volume={157},
  pages={106791},
  year={2023},
  publisher={Elsevier}
}

@article{SJZ-2025,
  title={{Complex mixer for MedMNIST classification decathlon}},
  author={Sun, S. and Jia, X. and Zheng, Z.},
  journal={Appl. Intell.},
  volume={55},
  number={16},
  pages={1--12},
  year={2025},
  publisher={Springer}
}

@inproceedings{NKKC-2025,
  title={{Topological Machine Learning for Low Data Medical Imaging}},
  author={Nuwagira, B. and Caner, K. and Fan-Hsi, K. P.  and  Coskunuzer, B. },
  booktitle={Machine Learning for Health (ML4H)},
  pages={824--838},
  year={2025},
  organization={PMLR}
}

@article{Micheal-2015,
  AUTHOR = {Lesnick, M.},
     TITLE = {The theory of the interleaving distance on multidimensional
              persistence modules},
   JOURNAL = {Found. Comput. Math.},
  FJOURNAL = {Foundations of Computational Mathematics. The Journal of the
              Society for the Foundations of Computational Mathematics},
    VOLUME = {15},
      YEAR = {2015},
    NUMBER = {3},
     PAGES = {\ 613--650},
      ISSN = {1615-3375,1615-3383},
   MRCLASS = {55N35},
  MRNUMBER = {3348168},
MRREVIEWER = {Peter\ Bubenik},
       DOI = {10.1007/s10208-015-9255-y}
}

@book{CV-2021,
  title={Topological data analysis with applications},
  author={Carlsson, G. and Vejdemo-Johansson, M.},
  year={2021},
  publisher={Cambridge University Press},
  address = {Cambridge}
}

@book{Leo-2020,
  title={Topological persistence in geometry and analysis},
  AUTHOR = {Polterovich, L. and Rosen, D. and Samvelyan, K.
              and Zhang, J.},
     TITLE = {Topological persistence in geometry and analysis},
    SERIES = {University Lecture Series},
    VOLUME = {74},
 PUBLISHER = {American Mathematical Society},
      YEAR = {2020},
     PAGES = {xi+128},
      ISBN = {978-1-4704-5495-1},
   MRCLASS = {55N31 (53Dxx 58Cxx)},
  MRNUMBER = {4249570},
MRREVIEWER = {Walter\ D.\ Freyn},
ADDRESS = { RI}
}

@book{zomorodian2005topology,
  AUTHOR = {Zomorodian, A. J.},
  TITLE = {Topology for computing},
    SERIES = {Cambridge Monographs on Applied and Computational Mathematics},
    VOLUME = {16},
 PUBLISHER = {Cambridge University Press },
      YEAR = {2005},
     PAGES = {xiv+243},
      ISBN = {0-521-83666-2},
   MRCLASS = {68-01 (54H99 57M20 57Q05 57Q45 65D18 68U05)},
  MRNUMBER = {2111929},
MRREVIEWER = {Bert\ J\"uttler},
       DOI = {10.1017/CBO9780511546945},
ADDRESS = {Cambridge}
}

@article{LCB-2025,
  title={Multi-parameter Module Approximation: an efficient and interpretable invariant for multi-parameter persistence modules with guarantees},
  author={Loiseaux, D. and Carri{\`e}re, M. and Blumberg, A. J.},
  journal={J Appl. and Comput. Topology},
  volume={9},
  number={4},
  pages={1--60},
  year={2025},
  publisher={Springer}
}

@article{multipers-2024,
  title = {Multipers: {{Multiparameter Persistence}} for {{Machine Learning}}},
  shorttitle = {Multipers},
  author = {Loiseaux, D. and Schreiber, H.},
  year = {2024},
  month = nov,
  journal = {Journal of Open Source Software},
  volume = {9},
  number = {103},
  pages = {6773},
  issn = {2475-9066},
  doi = {10.21105/joss.06773},
  langid = {english},
}

@inproceedings{gudhi-2014,
  title={The gudhi library: Simplicial complexes and persistent homology},
  author={Maria, C. and Boissonnat, J. D. and Glisse, M. and Yvinec, M.},
  booktitle={Mathematical Software--ICMS 2014: 4th International Congress, Seoul, South Korea, August 5-9, 2014. Proceedings 4},
  pages={167--174},
  year={2014},
  organization={Springer}
}

@inproceedings{alonso2024delaunay,
  title={Delaunay bifiltrations of functions on point clouds},
  author={Alonso, {\'A}. J. and Kerber, M. and Lam, T. and Lesnick, M.},
  booktitle={Proceedings of the 2024 Annual ACM-SIAM Symposium on Discrete Algorithms (SODA)},
  pages={4872--4891},
  year={2024},
  organization={SIAM}
}

@article{lesnick2024nerve,
  title={Nerve Models of Subdivision Bifiltrations},
  author={Lesnick, M. and McCabe, K.},
  journal={arXiv preprint arXiv:2406.07679},
  year={2024}
}

\end{document}